\documentclass[manuscript]{acmart}

\usepackage{graphicx}
\usepackage{latexsym}
\usepackage{url}

\usepackage{amsmath}
\usepackage{amsthm}
\usepackage{booktabs}
\usepackage{algorithm}
\usepackage{algorithmic}

\usepackage{fontawesome}
\usepackage{tikz}
\usetikzlibrary{automata,positioning,decorations.markings,arrows,intersections,calc,shapes,shapes.symbols,shadings}

\setcopyright{rightsretained}
\copyrightyear{2023}
\acmYear{2023}
\acmDOI{}
\acmJournal{JACM}

\usepackage{macros}
\usetikzlibrary{patterns}

\usepackage{booktabs}

\begin{document}

\title{Omega-Regular Reward Machines}
\author{Ernst Moritz Hahn}
\affiliation{%
  \institution{University of Twente}
  \city{Twente}
  \country{The Netherlands}
}
\email{e.m.hahn@utewnte.nl}
\orcid{0000-0002-9348-7684}

\author{Mateo Perez}
\affiliation{%
  \institution{University of Colorado Boulder}
  \city{Boulder}
  \state{Colorado}
  \postcode{80309}
  \country{USA}
}
\email{Mateo.Perez@Colorado.EDU}
\orcid{0000-0003-4220-3212}

\author{Sven Schewe}
\affiliation{%
  \institution{University of Liverpool}
  \streetaddress{Ashton Building, Ashton Street}
  \city{Liverpool}
  \state{England}
  \postcode{L69 3BX}
  \country{UK}
}
\email{Sven.Schewe@liverpool.ac.uk}
\orcid{0000-0002-9093-9518}

\author{Fabio Somenzi}
\affiliation{%
  \institution{University of Colorado Boulder}
  \city{Boulder}
  \state{Colorado}
  \postcode{80309}
  \country{USA}
}
\email{Fabio@Colorado.EDU}
\orcid{0000-0002-2085-2003}

\author{Ashutosh Trivedi}
\affiliation{%
  \institution{University of Colorado Boulder}
  \city{Boulder}
  \state{Colorado}
  \postcode{80309}
  \country{USA}
}
\email{Asutosh.Trivedi@Colorado.EDU}
\orcid{0000-0001-9346-0126}

\author{Dominik Wojtczak}
\affiliation{%
  \institution{University of Liverpool}
  \streetaddress{Ashton Building, Ashton Street}
  \city{Liverpool}
  \state{England}
  \postcode{L69 3BX}
  \country{UK}
}
\email{D.Wojtczak@liverpool.ac.uk}
\orcid{0000-0001-5560-0546}

\renewcommand{\shortauthors}{Hahn, Perez, Schewe, Somenzi, Trivedi, and Wojtczak}

\begin{abstract}
Reinforcement learning (RL) is a powerful approach for training agents to perform tasks, but designing an appropriate reward mechanism is critical to its success. 
However, in many cases, the complexity of the learning objectives goes beyond the capabilities of the Markovian assumption, necessitating a more sophisticated reward mechanism.
Reward machines and $\omega$-regular languages are two formalisms used to express non-Markovian rewards for quantitative and qualitative objectives, respectively.
This paper introduces $\omega$-regular reward machines, which integrate reward machines with $\omega$-regular languages to enable an expressive and effective reward mechanism for RL. 
We present a model-free RL algorithm to compute $\varepsilon$-optimal strategies against $\omega$-regular reward machines and evaluate the effectiveness of the proposed algorithm through experiments.
\end{abstract}
\begin{CCSXML}
<ccs2012>
   <concept>
       <concept_id>10003752.10003766.10003770</concept_id>
       <concept_desc>Theory of computation~Automata over infinite objects</concept_desc>
       <concept_significance>500</concept_significance>
       </concept>
   <concept>
       <concept_id>10010147.10010257.10010321</concept_id>
       <concept_desc>Computing methodologies~Machine learning algorithms</concept_desc>
       <concept_significance>500</concept_significance>
       </concept>
   <concept>
       <concept_id>10002950.10003648.10003700.10003701</concept_id>
       <concept_desc>Mathematics of computing~Markov processes</concept_desc>
       <concept_significance>500</concept_significance>
       </concept>
   <concept>
 </ccs2012>
\end{CCSXML}

\ccsdesc[500]{Theory of computation~Automata over infinite objects}
\ccsdesc[500]{Computing methodologies~Machine learning algorithms}
\ccsdesc[500]{Mathematics of computing~Markov processes}
\ccsdesc[500]{Theory of computation~Convergence and learning in games}


\maketitle

\section{Introduction}
\label{sec:intro}
Reinforcement learning (RL)~\cite{Sutton18} is a powerful learning-based synthesis paradigm that relies on providing rewards and punishment signals to reinforce or diminish behaviours. 
This is based on the principle that behaviours that are repeatedly rewarded tend to become habitual, while behaviours that are punished tend to diminish with experience.
Therefore, translating high-level objectives into reward and punishment signals is critical to successful RL applications. 

Simple objectives, such as cost-optimal reachability or safety, can be intuitively encoded into Markovian reward signals. However, more complex objectives require a stateful reward mechanism. 
Two formalisms that have been used to express non-Markovian reward signals are formal specifications ($\omega$-regular languages and linear temporal logic)\cite{Sadigh14,Hahn19,camacho2019ltl,gaon2020reinforcement} and reward machines (Mealy machines based monitors with scalar rewards as outputs)\cite{RM1,camacho2018non}. 
The former is used to express long-run logical constraints, or ``qualitative'' specifications, while the latter is used for ``quantitative'' objectives, such as the discounted sum of rewards. Note that the terms \emph{qualitative} and \emph{quantitative} are used in this paper to differentiate between logic-based specifications over infinite behaviour and reward-based optimisation objectives. However, it's important to note that these terms can be misleading since maximising the probability of satisfying a logical property is technically a quantitative requirement. Similarly, finding a policy that provides a reward greater than a given budget can be considered a qualitative requirement.

This paper argues for the need to optimise quantitative rewards under logical constraints over infinite horizons and proposes a model to conveniently express such learning objectives, which we refer to as \emph{$\omega$-regular reward machines}. 
These models integrate the two formalisms, allowing for the optimisation of quantitative rewards while also enforcing logical constraints over infinite horizons.
The proposed $\omega$-regular reward machines provide a powerful and efficient way to specify complex reward structures for RL, enabling the effective and efficient training of RL agents.

\subsection{Reward Programming in RL}
Formal specifications, such as linear temporal logic (LTL), $\omega$-regular languages, and their generalisations~\cite{Baier08}, provide unambiguous and intuitive languages to express infinite-horizon requirements. 
However, manually designing rewards from higher-level specifications is tedious and error-prone. 
To address this challenge, researchers have proposed automatic translations from formal specifications to reward signals, providing a programmable, transparent, explainable, and trustworthy RL.

Sadigh et al.~\cite{Sadigh14} initiated the study of model-free RL, where learning objectives were expressed in LTL. 
They used LTL to $\omega$-automaton reduction~\cite{Baier08} to design a scalar reward signal, with the hope that maximising the discounted objective maximises the probability of satisfaction of the LTL objective. 
However, the work of Hahn \emph{et al.}~\cite{Hahn19} revealed challenges in translating formal specifications to reward machines, and proposed a correct translation from more general $\omega$-automata based requirements to reward machines. 
Since then, several formally correct reward schemes~\cite{Hahn20b,Hahn20c,Bozkur20,DBLP:journals/csysl/OuraSU20} have been proposed to automate 
$\omega$-regular reward translation.

Icarte et al.~\cite{RM1,icarte2022reward} advocated for the need of non-Markovian reward signals and popularised the use of Mealy machines to express such rewards. 
Reward machines provide an imperative language to program reward signals, allowing designers to better tune the reward logic by expressing their domain-specific expertise in the form of scalar rewards. However, we argue that---since reward machines encode a finite-horizon, albeit discounted, view of the environment---they fail to capture intuitive specifications and give rise to unintended and undesirable behaviours. 
To support this claim, we adapt the counterexample given by Hahn et al.~\cite{Hahn19} for the translation scheme of Sadigh et al.~\cite{Sadigh14} to show how reward machines fail to capture intuitive specifications.

\tikzset{
  strat/.pic={
    \fill [rotate around={#1:(0,0)}] (0.1,-0.1) -- (0.1,0.1) -- (0.6,0.1) --
    (0.6,0.3) -- (0.95,0) -- (0.6,-0.3) -- (0.6,-0.1) --cycle;
  }
}

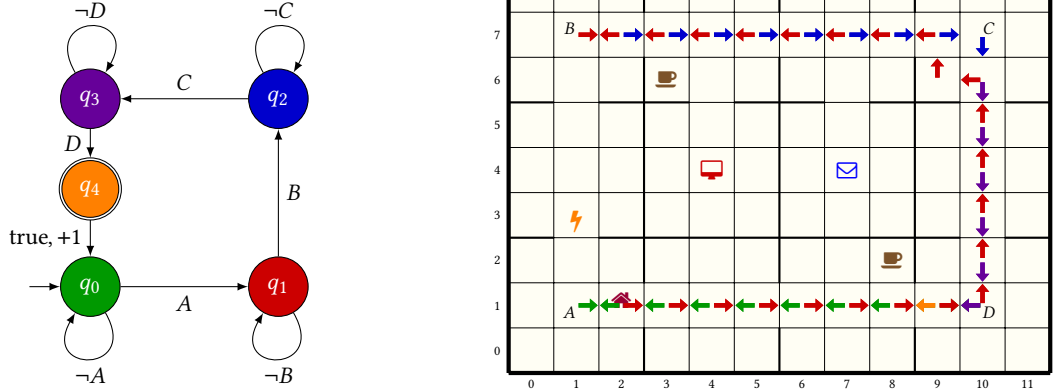
\begin{figure}
    \centering
    \begin{tikzpicture}[scale=1,transform shape]
        \node[state,initial,color=black,fill=green!60!black] (q0) {\textcolor{white}{$q_0$}};
        \node[state,color=black,fill=red!80!black] (q1) [right=2.5cm of q0] {\textcolor{white}{$q_1$}};
        \node[state,color=black,fill=blue!80!black] (q2) [above=2.5cm of q1] {\textcolor{white}{$q_2$}};
        \node[state,color=black,fill=violet!80!blue] (q3) [left=2.5cm of q2] {\textcolor{white}{$q_3$}};
        \node[state,color=black,fill=orange,accepting] (q4) [below=1.2cm of q3] {\textcolor{white}{$q_4$}};
        \path[->]
        (q0) edge [loop below] node [anchor=north] {$\neg A$} ()
        (q0) edge [] node [anchor=north] {$A$} (q1)
        (q1) edge [loop below] node [anchor=north] {$\neg B$} ()
        (q1) edge [] node [anchor=west] {$B$} (q2)
        (q2) edge [loop above] node [anchor=south] {$\neg C$} ()
        (q2) edge [] node [anchor=south] {$C$} (q3)
        (q3) edge [loop above] node [anchor=south] {$\neg D$} ()
        (q3) edge [] node [anchor=east] {$D$} (q4)
        (q4) edge [] node [anchor=east] {$\text{true},+1$} (q0)
        ;
    \end{tikzpicture}
    \hspace{2cm}
    \begin{tikzpicture}[scale=0.3,transform shape]
      \begin{scope}
        \edef\e{0}
        \edef\n{90}
        \edef\w{180}
        \edef\s{270}
    
        \fill[yellow!5] (0,0) rectangle (24,18);
        \foreach \x in {0,...,0} {
          \foreach \y in {0,...,11} {
            \path (1+24*\x+2*\y,-0.5) node {\Huge$\y$};
          }
          \draw[thick] (24*\x,6) -- ++(2,0) ++(2,0) -- ++(16,0) ++ (2,0) -- ++(2,0);
          \draw[thick] (24*\x,12) -- ++(2,0) ++(2,0) -- ++(4,0) ++ (2,0) -- ++(4,0) ++(2,0) -- ++(4,0) ++(2,0) -- ++(2,0);
          \foreach \y in {6,12,18} {
            \draw[thick] (24*\x+\y,0) -- ++(0,2) ++(0,2) -- ++(0,10) ++(0,2) -- ++(0,2);
          }
        }
        \foreach \y in {0,...,8} {
          \path (-0.5,1+2*\y) node {\Huge$\y$};
        }
        \foreach \x in {0,...,0} {
          \node[scale=1.6] (home-\x) at (5+24*\x,3+0.25)     {\Huge\textcolor{purple!80!black}{\faHome}};
          \node[scale=1.5] (coffee1-\x) at (7+24*\x,13) {\Huge\textcolor{brown!65!black}{\faCoffee}};
          \node[scale=1.5] (coffee2-\x) at (17+24*\x,5) {\Huge\textcolor{brown!65!black}{\faCoffee}};
          \node[scale=1.5] (envel-\x)   at (15+24*\x,9) {\Huge\textcolor{blue}{\faEnvelopeO}};
          \node[scale=1.5] (office-\x)  at (9+24*\x,9)  {\Huge\textcolor{red!80!black}{\faDesktop}};
          \node[scale=1.7] (bolt-\x) at (3+24*\x,6.7) {\Huge\textcolor{orange}{\faBolt}};
          \node[scale=1.3] (A-\x)       at (3-0.3+24*\x,3-0.3)  {\Huge$A$};
          \node[scale=1.3] (B-\x)       at (3-0.3+24*\x,15+0.3) {\Huge$B$};
          \node[scale=1.3] (C-\x)       at (21+0.3+24*\x,15+0.3){\Huge$C$};
          \node[scale=1.3] (D-\x)       at (21+0.3+24*\x,3-0.3) {\Huge$D$};
        }
        \draw[xstep=24cm,ystep=18cm,ultra thick] (0,0) grid (24,18);
        \draw[step=2cm,ultra thin] (0,0) grid (24,18);
        \pic[green!60!black] at (3,3) {strat=\e};
        \pic[green!60!black] at (3,3) {strat=\e};
        \pic[green!60!black] at (5,3) {strat=\w};
        \pic[green!60!black] at (5,3) {strat=\w};
        \pic[red!80!black] at (5,3) {strat=\e};
        \pic[red!80!black] at (5,3) {strat=\e};
        \pic[green!60!black] at (7,3) {strat=\w};
        \pic[green!60!black] at (7,3) {strat=\w};
        \pic[red!80!black] at (7,3) {strat=\e};
        \pic[red!80!black] at (7,3) {strat=\e};
        \pic[green!60!black] at (9,3) {strat=\w};
        \pic[green!60!black] at (9,3) {strat=\w};
        \pic[red!80!black] at (9,3) {strat=\e};
        \pic[red!80!black] at (9,3) {strat=\e};
        \pic[green!60!black] at (11,3) {strat=\w};
        \pic[green!60!black] at (11,3) {strat=\w};
        \pic[red!80!black] at (11,3) {strat=\e};
        \pic[red!80!black] at (11,3) {strat=\e};
        \pic[green!60!black] at (13,3) {strat=\w};
        \pic[green!60!black] at (13,3) {strat=\w};
        \pic[red!80!black] at (13,3) {strat=\e};
        \pic[red!80!black] at (13,3) {strat=\e};
        \pic[green!60!black] at (15,3) {strat=\w};
        \pic[green!60!black] at (15,3) {strat=\w};
        \pic[red!80!black] at (15,3) {strat=\e};
        \pic[red!80!black] at (15,3) {strat=\e};
        \pic[green!60!black] at (17,3) {strat=\w};
        \pic[green!60!black] at (17,3) {strat=\w};
        \pic[red!80!black] at (17,3) {strat=\e};
        \pic[red!80!black] at (17,3) {strat=\e};
        \pic[green!60!black] at (19,3) {strat=\e};
        \pic[green!60!black] at (19,3) {strat=\w};
        \pic[red!80!black] at (19,3) {strat=\e};
        \pic[red!80!black] at (19,3) {strat=\e};
        \pic[orange] at (19,3) {strat=\w};
        \pic[orange] at (19,3) {strat=\w};
        \pic[green!60!black] at (21,3) {strat=\w};
        \pic[green!60!black] at (21,3) {strat=\w};
        \pic[red!80!black] at (21,3) {strat=\n};
        \pic[red!80!black] at (21,3) {strat=\n};
        \pic[violet!80!blue] at (21,3) {strat=\w};
        \pic[violet!80!blue] at (21,3) {strat=\w};
        \pic[red!80!black] at (21,5) {strat=\n};
        \pic[red!80!black] at (21,5) {strat=\n};
        \pic[violet!80!blue] at (21,5) {strat=\s};
        \pic[violet!80!blue] at (21,5) {strat=\s};
        \pic[red!80!black] at (21,7) {strat=\n};
        \pic[red!80!black] at (21,7) {strat=\n};
        \pic[violet!80!blue] at (21,7) {strat=\s};
        \pic[violet!80!blue] at (21,7) {strat=\s};
        \pic[red!80!black] at (21,9) {strat=\n};
        \pic[red!80!black] at (21,9) {strat=\n};
        \pic[violet!80!blue] at (21,9) {strat=\s};
        \pic[violet!80!blue] at (21,9) {strat=\s};
        \pic[red!80!black] at (21,11) {strat=\n};
        \pic[red!80!black] at (21,11) {strat=\n};
        \pic[violet!80!blue] at (21,11) {strat=\s};
        \pic[violet!80!blue] at (21,11) {strat=\s};
        \pic[red!80!black] at (19,13) {strat=\n};
        \pic[red!80!black] at (19,13) {strat=\n};
        \pic[red!80!black] at (21,13) {strat=\w};
        \pic[red!80!black] at (21,13) {strat=\w};
        \pic[violet!80!blue] at (21,13) {strat=\s};
        \pic[violet!80!blue] at (21,13) {strat=\s};
        \pic[red!80!black] at (3,15) {strat=\e};
        \pic[red!80!black] at (3,15) {strat=\e};
        \pic[red!80!black] at (5,15) {strat=\w};
        \pic[red!80!black] at (5,15) {strat=\w};
        \pic[blue!80!black] at (5,15) {strat=\e};
        \pic[blue!80!black] at (5,15) {strat=\e};
        \pic[red!80!black] at (7,15) {strat=\w};
        \pic[red!80!black] at (7,15) {strat=\w};
        \pic[blue!80!black] at (7,15) {strat=\e};
        \pic[blue!80!black] at (7,15) {strat=\e};
        \pic[red!80!black] at (9,15) {strat=\w};
        \pic[red!80!black] at (9,15) {strat=\w};
        \pic[blue!80!black] at (9,15) {strat=\e};
        \pic[blue!80!black] at (9,15) {strat=\e};
        \pic[red!80!black] at (11,15) {strat=\w};
        \pic[red!80!black] at (11,15) {strat=\w};
        \pic[blue!80!black] at (11,15) {strat=\e};
        \pic[blue!80!black] at (11,15) {strat=\e};
        \pic[red!80!black] at (13,15) {strat=\w};
        \pic[red!80!black] at (13,15) {strat=\w};
        \pic[blue!80!black] at (13,15) {strat=\e};
        \pic[blue!80!black] at (13,15) {strat=\e};
        \pic[red!80!black] at (15,15) {strat=\w};
        \pic[red!80!black] at (15,15) {strat=\w};
        \pic[blue!80!black] at (15,15) {strat=\e};
        \pic[blue!80!black] at (15,15) {strat=\e};
        \pic[red!80!black] at (17,15) {strat=\w};
        \pic[red!80!black] at (17,15) {strat=\w};
        \pic[blue!80!black] at (17,15) {strat=\e};
        \pic[blue!80!black] at (17,15) {strat=\e};
        \pic[red!80!black] at (19,15) {strat=\w};
        \pic[red!80!black] at (19,15) {strat=\w};
        \pic[blue!80!black] at (19,15) {strat=\e};
        \pic[blue!80!black] at (19,15) {strat=\e};
        \pic[blue!80!black] at (21,15) {strat=\s};
        \pic[blue!80!black] at (21,15) {strat=\s};
      \end{scope}
    \end{tikzpicture}
  \caption{An $\omega$-regular reward machine and the corresponding learned strategy in the office patrol example. The arrow colour corresponds to the $\omega$-regular reward machine state colours. The strategy was learned with Q-learning, as described in Section~\ref{sec:experiments}. Rewards are zero unless otherwise specified.}
  \vspace{-2em}
  \label{fig:officegrid}
\end{figure}

\begin{example}[Counter-intuitive Office Grid-World.]
Figure~\ref{fig:officegrid} shows a grid-world example adapted from \cite{icarte2022reward}.  A robot patrols the four corner rooms of an office complex. 
However, unlike in \cite{icarte2022reward}, an electrical wire dangles from the ceiling along the path connecting rooms $A$ and $B$, blocking safe passage.
The robot can avoid the hazard by reaching $B$ from $A$ via $D$ and $C$ and then retracing its steps. 
Alternatively, it can try to fix the dangling wire but, in so doing, it may be damaged and put out of commission with probability $1/5$. 
If successful, the robot can then follow the shorter route that connects the corners in a simple cycle. 
Doling out reward every time the robot completes one round does not guarantee that the robot will follow the \emph{safe} strategy that avoids the dangling wire. This is because the \emph{risky} strategy, where the robot attempts to fix the dangling wire, incurs a risk that is offset by the reduction in path length, resulting in a higher expected discounted reward.  The key problem is that maximising the expectation of the cumulative reward is different from maximising the probability that the reward is positive. We transform the reward machine into an $\omega$-regular reward machine by marking the state $q_4$ as \emph{accepting} (or, equivalently, mark its outgoing transitions as accepting).
The overall objective is now modified so that visiting this accepting state infinitely with maximum probability takes precedence over the discounted reward. Using the technique described in this paper, we learn a strategy with Q-learning that satisfies the $\omega$-regular reward machine and patrols while avoiding the hazard.
\end{example}

At the same time, $\omega$-regular language-based RL is not sufficient to express simple quantitative preferences as shown in the following example.
\begin{example}[Office Grid-World with Preference.]
In the environment of Figure~\ref{fig:officegrid}, the robot may be tasked with picking up mail and coffee for the people who work in the ``office'' room.  The robot may be free to collect the mail before getting the coffee, or vice versa but, \emph{preferably} it should get mail first to prevent the coffee from getting cold.  This kind of preference is naturally expressed as rewards on the transitions of the reward machine, while the satisfaction of the main objective (delivery of mail and coffee to the office) is guaranteed by imposing an $\omega$-regular objective. 
\end{example}

While there are some efforts to combine quantitative and qualitative formalism to express learning objective, they have been limited in expressing $\omega$-regular languages~\cite{RM1,icarte2022reward,Bozkurt0P21}.
To the best of our knowledge, there is no prior work that can handle general $\omega$-regular objectives with discounted rewards. 
We propose $\omega$-regular reward machines to fill this gap.
The $\omega$-regular reward machines ($\omega$-RMs) are defined as nondeterministic B\"uchi automata equipped with a scalar reward function. 
In RL, $\omega$-RMs can act as interpreters that observe the sequence of actions taken by the learning agent and the corresponding sequence of observations from the environment and provide a sequence of scalar rewards.
Unlike reward machines~\cite{icarte2022reward}, $\omega$-RMs may be non-deterministic, and the resolution of these choices is delegated to the RL agent.
The goal of the RL agent is primarily to visit the accepting states infinitely often and then maximise the discounted sum of rewards.
Besides the examples above, there are multiple scenarios that call for expressing such combinations in RL.
\begin{itemize}
    \item {\bf Specification gaming.} 
    The term \emph{specification gaming} refers to the behaviour of a learning agent that satisfies the literal specification, often in the terms of the reward signal, but not the intended one. While it is impossible to eliminate instances of specification gaming beforehand, detecting such behaviour can provide clues to some underspecified constraints.

Although it is easy to explicitly express the constraints, designing reward signals that integrate such constraints can be challenging. For instance, consider the coastrunner example described in~\cite{Coastrunners}. The environment provides a positive reward for target hitting, assuming that the agent naturally wants to finish the boat race as soon as possible. However, since this assumption is not backed by any explicit reinforcement, the learned behavior may not align with the desired one.
One possible reward mechanism to address this issue is $\omega$-regular reward machines that provide the discounted sum of rewards predicated upon the satisfaction of the $\omega$-regular objective that the agent eventually terminates the boat race. Note that this requirement cannot be expressed directly as a reward machine or in formal logic.
    
    \item {\bf Relative Preference over Accepting States.} 
    B\"uchi automata~\cite{Baier08} generalise finite automata to accept infinite behaviours that cause the accepting transitions to be visited infinitely often. 
    $\omega$-regular reward machines generalise B\"uchi automata by allowing the designer to express relative preference over various accepting states.

    \item {\bf Repair Machines.}
    Another scenario where $\omega$-regular reward machines can be useful is in repair machines. Suppose we have an RL problem where the learning objective is expressed as a B\"uchi automaton, and the RL agent can rewrite some of the observations of the environment before they are evaluated by the interpreter. 
    The space of these repairs is given by a repair machine~\cite{DBLP:conf/atva/DaveKMT22}, which is defined as a weighted nondeterministic transducer, where the weight corresponds to the cost of the rewrite action. 
    In this case, the goal of the RL agent is to satisfy the objective given by the B\"uchi automaton while minimising a discounted sum of the costs associated with the repairs.
    The composition of the B\"uchi automaton-based specification and the repair machine can be expressed as an $\omega$-regular reward machines. By leveraging $\omega$-regular reward machines, we can ensure that the learning agent satisfies the intended specifications while optimising the cumulative reward and minimising the costs associated with the repairs.
    
    \item {\bf Ulysses Contract.} 
    Another application of $\omega$-regular reward machines is in modelling Ulysses contracts. A Ulysses contract is a decision made by an agent to restrict potentially tempting but irrational choices by a future version of itself. This form of self-binding contract is named after the Greek hero Ulysses who, in the Odyssey, has his crew tie him to the mast to safely enjoy the sirens' song.
    With $\omega$-regular RMs, the requirement to visit certain states infinitely often encodes the Ulysses contract, while the individual rewards encode various immediate rewards. This model allows the RL agent to maximise rewards without violating the specification. 
    By using $\omega$-regular reward machines to express Ulysses contracts, we can ensure that the learning agent follows a long-term plan of action that aligns with the desired objectives, even in the presence of potentially tempting but irrational choices. 
\end{itemize}

\subsection{Contributions}
The paper provides an expressive framework for designing RL agents that can satisfy complex temporal specifications while optimising the cumulative reward.
We introduce $\omega$-regular reward machines, which can express complex objectives involving both quantitative and qualitative aspects.
We then provide a convergent RL algorithm that approximates the optimal value for the $\omega$-regular objective.
In the case of a known model, we show the tractability of computing optimal value and $\varepsilon$-optimal policies. 
We also implement the proposed algorithm as an open-source tool and provide experimental results demonstrating its effectiveness.

\subsection{Related Work}
So far, we have cited several related works on reward machines~\cite{RM1,icarte2022reward} and formal specifications~\cite{Sadigh14,hasanbeig2019reinforcement,Hahn19,Hahn20b,DBLP:journals/csysl/OuraSU20,Bozkur20} in model-free reinforcement learning.
There has been substantial work on lexicographic objectives in optimisation and RL, including lexicographic discounted objectives~\cite{skalse2022lexicographic,chatterjee2006markov,Chatte07}, lexicographic $\omega$-regular objectives~\cite{Hahn21c}, and a combination of safety and discounted objectives~\cite{Bozkurt0P21}.
However, to the best of our knowledge, this is the first work to consider the general class of $\omega$-regular objectives with discounted rewards in model-free RL.


\section{Preliminaries}
\label{sec:prelims}
An alphabet $\Sigma$ is a finite set of letters. 
A finite string (resp. $\omega$-string)  over $\Sigma$ is defined as a finite sequence (resp.\ an infinite $\omega$-sequence) of letters from  $\Sigma$. 
We denote the empty string by $\varepsilon$.
We write $\Sigma^*$ and $\Sigma^\omega$ for the set of
finite and $\omega$-strings over $\Sigma$.  
A language (resp.\ $\omega$-language) $L$ over an alphabet $\Sigma$ is defined
as a set of finite strings (resp. $\omega$-strings).

\subsection{Markov Decision Processes}
Let $\DIST(S)$ denote the set of all discrete distributions over $S$.
\begin{definition}[Markov decision process]
 A Markov decision process (MDP) $\Mm$ is a tuple $(S, s_0, A, T, AP, L)$, where 
 \begin{itemize}
     \item $S$ is a finite set of states and $s_0 \in S$ is the initial state, 
     \item $A$ is a finite set of {\it actions},  \item $T\colon S \times A \to \DIST(S)$ is the {probabilistic transition function}, 
     \item $AP$ is the set of {\it atomic propositions} (observations), and 
     \item $L\colon S \to 2^{AP}$ is the {\it labelling function}. 
 \end{itemize}   
 For any state $s \in S$, we let $A(s)$ denote the set of actions that can be selected in
state $s$.
A {\it sub-MDP} of $\Mm$ is an MDP $\Mm' = (S', A', T', AP, L')$, where $S' \subset
S$, $A' \subseteq A$ is such that $A'(s) \subseteq A(s)$ for every $s \in S'$,
and $T'$ and $L'$ are analogous to $T$ and $L$ when restricted to $S'$ and
$A'$. Moreover $\Mm'$ is closed under probabilistic transitions.
An MDP is a Markov chain if $A(s)$ is singleton for all $s \in S$.
\end{definition}
 
For states $s, s' \in S$ and $a \in A(s)$, $T(s,
a)(s')$ equals $\Pr (s' | s, a)$.
A {\it run} of $\Mm$ is an $\omega$-word $\seq{s_0, a_1, s_1, \ldots} \in S
\times (A \times S)^\omega$ such that $\Pr(s_{i+1} | s_{i}, a_{i+1}) {>} 0$ for all $i
\geq 0$.
A finite run is a finite such sequence. 
For a {\it run} $r = \seq{s_0, a_1, s_1, \ldots}$ we define the corresponding
labelled run as $L(r) = \seq{L(s_0), L(s_1), \ldots} \in (2^{AP})^\omega$.
We write $\Runs^\Mm (\FRuns^\Mm)$  for the set of runs (finite runs) of the MDP
$\Mm$  and $\Runs{}^\Mm(s) (\FRuns{}^\Mm(s))$  for the set of runs (finite runs) of
the MDP $\Mm$ starting from the state $s$.  We write $\last(r)$ for the last state
of a finite run $r$.

A {\it strategy} in $\Mm$ is a function $\sigma \colon \FRuns \to \DIST(A)$ such that
$\supp(\sigma(r)) \subseteq A(\last(r))$, where $\supp(d)$ denotes the support
of the distribution $d$.
A memory skeleton is a tuple $M = (M, m_0, \alpha_u)$ where $M$ is a finite set of
memory states, $m_0$ is the initial state, and $\alpha_u: M \times \Sigma \to M$
is the memory update function.
We define the extended memory update function $\hat{\alpha}_u: M {\times} \Sigma^* \to M$ 
in a straightforward way.
A finite memory strategy for $\Mm$ over a memory skeleton $M$ is a Mealy machine 
$(M, \alpha_x)$ where $\alpha_x: S {\times} M \to \DIST(A)$ is the {\it next action function} 
that suggests the next action based on the MDP and memory state. 
The semantics of a finite memory strategy $(M, \alpha_x)$ is given as a strategy 
$\sigma: \FRuns \to \DIST(A)$ such that for every $r \in \FRuns$ we have that 
$\sigma(r) = \alpha_x(\last(r), \hat{\alpha}_u(m_0, L(r)))$.

A strategy $\sigma$ is {\it pure} if $\sigma(r)$ is a point
distribution for  all runs $r \in \FRuns^\Mm$ and is {\it mixed} (short for
strictly mixed) if $\supp(\sigma(r)) = A(\last(r))$ for  all runs
$r \in \FRuns^\Mm$.
Let $\Runs^\Mm_\sigma(s)$ denote the subset of runs $\Runs^\Mm(s)$ that
correspond to strategy $\sigma$ with initial state $s$.
Let $\Strat_\Mm$ be the set of all strategies.
We say that $\sigma$ is {\it stationary} if $\last(r) = \last(r')$ implies
$\sigma(r) = \sigma(r')$ for all finite runs $r, r' \in \FRuns^\Mm$.
A stationary strategy can be given as a function $\sigma: S \to \DIST(A)$.  
A strategy is {\it positional} if it is both pure and stationary.

 An MDP $\Mm$ under a strategy $\sigma$ results in a Markov chain $\Mm_\sigma$.
If $\sigma$ is a finite memory strategy, then $\Mm_\sigma$ is a finite-state
Markov chain.
The behaviour of an MDP $\Mm$ under a strategy $\sigma$ and starting state
$s \in S$ is defined on a probability space
$(\Runs^\Mm_\sigma(s), \Ff_{\Runs^\Mm_\sigma(s)}, {\Pr}^\Mm_\sigma(s))$
over the set of infinite runs of $\sigma$ with starting state $s$.  Given a random variable 
$f \colon \Runs^\Mm \to \Real$, we denote by $\eE^\Mm_\sigma(s) \set{f}$ the
expectation of $f$ over the runs of $\Mm$ originating at $s$ that
follow the strategy $\sigma$.

\subsection{Discounted Reward Objectives}
\label{sec:discounted}
The learning objective over MDPs in RL is often typically expressed using a Markovian reward function, i.e. a function $\rho\colon S \times A \times S \to \Real$ assigning utility to transitions.
A {\it rewardful} MDP is a tuple $\Mm = (S, s_0, A, T, \rho)$ where $S, s_0, A,$ and $T$ are defined in a similar way as for MDP, and $\rho$ is a Markovian reward function.
A rewardful MDP $\Mm$ under a
strategy $\sigma$ determines a sequence of random rewards
${\rho(X_{i-1}, Y_i, X_i)}_{i \geq 1}$, where $X_i$ and $Y_i$ are the
random variables denoting the $i$-th state and action, respectively.
For $\lambda \in [0, 1[$, the {\it discounted reward} 
$\EDisct(\lambda)^\Mm_\sigma(s)$ is defined as
$\lim_{N \to \infty} \eE^\Mm_\sigma(s) \Big\{\sum_{1 \leq i \leq N}
  \lambda^{i-1} \rho(X_{i-1}, Y_i, X_i)\Big\}$.
We define the optimal discounted reward
$\EDisct^\Mm_*(s)$ for a state $s \in S$ as 
$\EDisct^\Mm_*(s) \rmdef \sup_{\sigma \in \Strat_\Mm} \EDisct^\Mm_\sigma(s)$.
A strategy $\sigma$ is discount-optimal if
$\EDisct^\Mm_\sigma(s) = \EDisct^\Mm_*(s)$ for all $s {\in} S$.

Often, complex learning objectives cannot be expressed using Markovian reward signals. 
A recent trend is to express learning objectives using finite-state reward
machines~\cite{icarte2022reward}. 
A (nondeterministic) reward machine is a tuple $\Rr = (\Sigma, U, u_0, \delta, \rho)$
where $U$ is a finite set of states, $u_0 {\in} U$ is the starting state,
$\delta\colon U {\times} \Sigma \to 2^U$ is the transition relation, 
and $\rho\colon U \times \Sigma \times U \to \Real$ is the reward function.

Given an MDP $\Mm = (S, s_0, A, T, AP, L)$ and a reward machine $\Rr = (2^{AP}, U, u_0, \delta, \rho)$,  their product 
$\Mm{\times}\Rr = (S{\times} U, (s_0,u_0), (A {\times} U),
T^\times, \rho^\times)$
is a rewardful MDP where
$T^\times\colon (S {\times} U) \times (A {\times} U) \to \DIST(S{\times} U)$ is such that
\[
((s,u), (a, u'))(({s}',{u}')) \mapsto 
\begin{cases}
T(s,a)({s}') & \text{if } u' {\in} \delta(u,L(s)) \\
0 & \text{otherwise.}
\end{cases}
\]
and $\rho^\times\colon (S{\times} U) \times (A {\times} U) \times (S{\times} U)\to \Real$ is defined such that 
$\rho^\times((s,u), (a, u'), (s', u'))$ equals $\rho(u, L(s), u')$ if $(u,L(s),{u}') \in \delta$.
For discounted reward objective, the optimal strategy of
$\Mm{\times}\Rr$ are positional on $\Mm{\times}\Rr$.
Moreover, these positional strategies characterise a finite memory strategy (with memory skeleton based on the  states of $\Rr$ and the next-action function based on the positional strategy) over $\Mm$ maximising the learning objective given by $\Rr$. 

In our reductions, we make use of total reward objective $\ETotal^\Mm_*(s)$ defined in a similar fashion as the discounted objective when the discount factor $\lambda$ is equal to $1$.
The concepts of expected total reward and optimal strategy is defined in an analogous manner.

\subsection{Omega-Regular Specifications}
\label{sec:omega}
\begin{definition}[B\"uchi automaton]
    A \emph{B\"uchi automaton} is a tuple
${\mathcal A} = (\Sigma,Q,q_0,\delta,F)$, where \begin{itemize}
    \item $\Sigma$ is a finite
\emph{alphabet}, 
\item $Q$ is a finite set of \emph{states}, 
\item $q_0 \in Q$ is
the \emph{initial state}, 
\item $\delta \colon Q \times \Sigma \to 2^Q$ is
the \emph{transition function}, and 
\item $F \subseteq Q \times \Sigma\times Q$ is the
set of \emph{accepting transitions}.
\end{itemize}

\end{definition}
A \emph{run} $r$ of ${\mathcal A}$ on $w \in \Sigma^\omega$
is an $\omega$-word $r_0, w_0, r_1, w_1, \ldots$ in
$(Q \times \Sigma)^\omega$ such that $r_0 {=} q_0$ and, for $i > 0$,
$r_i \in \delta(r_{i-1},w_{i-1})$.  Each triple
$(r_{i-1},w_{i-1},r_i)$ is a \emph{transition} of ${\mathcal A}$.
We write $\infi(r)$ for the set of transitions that appear infinitely
often in the run $r$.
A run $r$ of ${\mathcal A}$ is \emph{accepting} if $\infi(r) \cap
F \neq \emptyset$. 
The \emph{language} $\Ll(\Aa)$ of ${\mathcal A}$ is the subset of words in
$\Sigma^\omega$ that have accepting runs in ${\mathcal A}$.
A language is $\omega$-\emph{regular} if it is accepted
by a B\"uchi automaton.

Given an MDP $\Mm = ( S, s_0, A, T, AP, L )$
and a B\"uchi automaton $\mathcal{A} = (2^{AP}, Q, q_0, \delta, F )$,
their \emph{product}
$\Mm \times \mathcal{A} = ( S {\times} Q, (s_0,q_0), A {\times} Q, T^\times, F^\times )$ is an MDP with accepting transitions $F^\times$ where
$T^\times\colon (S {\times} Q) \times (A {\times} Q) \to \DIST(S \times Q)$ is such that
\[
((s,q),(a,q'))(({s}',{q}')) \mapsto 
\begin{cases}
T(s,a)({s}') & \text{if } (q,L(s,a,{s}'),{q}') {\in} \delta \\
0 & \text{otherwise.}
\end{cases}
\]
The set of accepting transitions 
$F^\times \subseteq (S \times Q) \times (A \times Q) \times (S
\times Q)$ is defined by $((s,q),(a,q'),(s',q')) \in F^\times$
if, and only if, $(q,L(s,a,s'),q') \in F$ and $T(s,a)(s') > 0$.

A strategy $\sigma$ on the product defines a strategy $\sigma'$ on the MDP with the same value, and vice versa.  Note that for a stationary
$\sigma$ on the product, the strategy $\sigma$ on the MDP may need memory.

An {\it end-component}~\cite{Luca98} of an MDP $\Mm$ is a sub-MDP $\Mm'$ such that for every state pair $s, s' \in S'$ there is a strategy that can reach $s'$ from $s$ with positive probability. 
A maximal end-component is an end-component that is maximal under set-inclusion.
Every state $s$ of an MDP $\Mm$ belongs to at most one maximal end-component.
End-components and runs of the product MDP are defined just like for MDPs.
An accepting end-component is an end-component that contains an accepting transition.
A run of $\Mm {\times} \mathcal{A}$ is accepting if
$\inf(r) \cap F^\times \neq \emptyset$.
We define the B\"{u}chi satisfaction probability 
$\blike_{\sigma}(s)$
of a strategy $\sigma$ as the probability of this strategy generating an accepting run, i.e.
\[
\textrm{Pr}_{\sigma}^{\Mm\times\Aa} \Set{ r \in
\Runs_{\sigma}^{\Mm\times\Aa}(s,q_0) : \inf(r) \cap F^\times
\neq \emptyset } .
\]
Similarly, $\blike(s)$ is the optimal satisfaction probability over the product, i.e.\  
$\blike(s) = \sup_{\sigma}\blike_{\sigma}(s,q_0)$.
We say that a strategy $\sigma_*$ is \emph{B\"uchi-optimal} from $s \in S$ if $\blike(s) = \blike_{{\sigma_*}}(s, q_0)$.

\section{Omega-Regular Reward Machines}
\label{sec:omega_rm}

Our definition of $\omega$-regular reward machine integrates the definitions of reward machines and B\"uchi automata. 
The notion of product with an MDP is defined in a similar fashion.
The optimisation objective is to compute optimal discounted reward over all B\"uchi-optimal strategies and near-optimal strategies achieving this reward.

\begin{definition}[$\omega$-Regular Reward Machine ($\omega$-RM)]
An $\omega$-RM is a tuple $\Rr = (\Sigma, U, u_0, \delta, \rho, F)$ where 
\begin{itemize}
    \item $U$ is a finite set of states, 
    \item $u_0 \in U$ is the starting state, \item $\delta\colon U \times \Sigma \to 2^U$ is the transition relation, 
    \item $\rho\colon U \times \Sigma \times U \to \Real$ is the reward function, and 
    \item $F \subseteq U \times \Sigma \times U$ is the
set of \emph{accepting transitions}.
\end{itemize}
\end{definition} 
Given an MDP $\Mm = ( S, s_0, A, T, AP, L )$
and an automaton $\mathcal{R} = (2^{AP}, U, u_0, \delta, \rho, F)$,
their \emph{product}
$\mathcal P = \Mm \times \mathcal{R} = ( S {\times} U, (s_0, u_0), A {\times} U, T^\times, \rho^\times, F^\times )$ is an MDP with initial
state $(s_0,u_0)$ and accepting transitions $F^\times$ where
$T^\times\colon (S {\times} U) \times ((A {\times} U) )\to \DIST(S {\times} U)$ is such that:
\[
T^\times((s,u), (a, u'))(({s}',{u}')) = 
\begin{cases}
T(s,a)({s}') & \text{if } u' {\in} \delta(u,L(s)) \\
0 & \text{otherwise.}
\end{cases}
\]

The set of accepting transitions 
$F^\times \subseteq (S \times U) \times (A \times U) \times (S \times U)$ is defined by $((s,q),(a,q'),(s',q')) \in F^\times$
if, and only if, $(q,L(s),q') \in F$ and $T(s,a)(s') > 0$.

Let us fix the product MDP $\mathcal P = \Mm \times \mathcal{R}$ as the tuple $(Q, q_0, A, T,  \rho, F)$ for the rest of this section.
To define the optimisation objective for $\omega$-RM, we need to define the following concepts over the product MDP $\mathcal P$.
\begin{itemize}
    \item The \emph{B\"uchi satisfaction probability}
    $\blike_\sigma (s)$ is the probability to satisfy the B\"uchi objective by a strategy $\sigma$ from a given state $s$ and is defined similar to that for B\"uchi automata (Section~\ref{sec:omega}). 
    The \emph{optimal satisfaction probability} $\blike(s)$ and a B\"uchi-optimal strategy $\sigma^*$ that achieves these, i.e., $\blike(s_0) = \blike_{\sigma^*}(s_0)$ is defined similar to that for B\"uchi automata (Section~\ref{sec:omega}).
    \item
    The \emph{optimal B\"uchi-discounted value} $\bval$ is the optimal  discounted reward among B\"uchi optimal strategies, i.e.,
    \[
    \bval : q \mapsto \sup_\sigma \Set{\val_\sigma(q) \mid \blike(q)=\blike_\sigma(q)} 
    \]
    where $\val_\sigma : q \mapsto  \EDisct(\lambda)^\Mm_\sigma(q)$ is the discounted value of $\sigma$.
    \item An \emph{optimal B\"uchi-discounted strategy} is a strategy $\sigma$ that attains optimal B\"uchi-discounted value, i.e.,  $\blike(q_0)=\blike_\sigma(q_0)$ and, for a given $\varepsilon>0$ an $\varepsilon$-optimal (near optimal) B\"uchi-discounted strategy is such that $\bval_\sigma(q_0)>\val_\sigma(q_0)-\varepsilon$. 
\end{itemize}
We seek near-optimal strategies that maximise the chance of satisfying the B\"uchi objective, but will only be arbitrarily close to satisfying the discounted reward objective due to the following observation. 
\begin{lemma}
\label{lemma:noopt}
The optimal B\"uchi-discounted strategies may not exist.
\end{lemma}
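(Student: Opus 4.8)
The plan is to prove the lemma by exhibiting a counterexample in which the supremum defining $\bval$ is approached by B\"uchi-optimal strategies but attained by none of them. Fix a discount factor $0<\lambda<1$. Take $\Mm$ with two states $s_0$ (initial, labelled $\{a\}$) and $s_1$ (absorbing, labelled $\{b\}$), where $s_0$ has one action that self-loops and one action that moves to $s_1$; and take $\Rr$ to have a single state $u_0$ that self-loops on every letter, emitting reward $1$ on $\{a\}$ via a non-accepting transition and reward $0$ on $\{b\}$ via an accepting transition. In the reachable part of $\mathcal P = \Mm\times\Rr$ there is a state $p_0=(s_0,u_0)$ carrying a reward-$1$ non-accepting self-loop together with a transition to a state $p_1=(s_1,u_0)$ whose only transition is a reward-$0$ accepting self-loop. (The exact reward on the transition into $p_1$ is immaterial; any product of this shape works.)

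\textbf{Key steps.} First I would observe that $\blike(p_0)=1$, witnessed by the strategy that moves to $p_1$ immediately and then loops on the accepting self-loop forever. Second, I would show that a strategy $\sigma$ is B\"uchi-optimal if and only if it reaches $p_1$ with probability one: if $\sigma$ remained in $p_0$ forever with positive probability, those runs would contain no accepting transition (the self-loop at $p_0$ is non-accepting), so $\blike_\sigma(p_0)<1$. Hence, under any B\"uchi-optimal $\sigma$, the number $N$ of times the self-loop at $p_0$ is taken before moving to $p_1$ is finite almost surely. Third, since all reward is collected at $p_0$, the discounted reward along such a run is $\sum_{i=0}^{N}\lambda^{i}=\frac{1-\lambda^{N+1}}{1-\lambda}$, so $\val_\sigma(p_0)=\frac{1-\eE_\sigma\!\left[\lambda^{N+1}\right]}{1-\lambda}$; because $N<\infty$ almost surely and $\lambda>0$, the random variable $\lambda^{N+1}$ is almost surely positive, hence $\eE_\sigma[\lambda^{N+1}]>0$, so $\val_\sigma(p_0)<\frac{1}{1-\lambda}$ for \emph{every} B\"uchi-optimal $\sigma$. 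Fourth, the deterministic strategies $\sigma_n$ that take the self-loop at $p_0$ exactly $n$ times and then move to $p_1$ are B\"uchi-optimal with $\val_{\sigma_n}(p_0)=\frac{1-\lambda^{n+1}}{1-\lambda}\to\frac{1}{1-\lambda}$ as $n\to\infty$; therefore $\bval(p_0)=\frac{1}{1-\lambda}$. Combining the last two points, the value $\bval(p_0)$ is realised by no B\"uchi-optimal strategy, so an optimal B\"uchi-discounted strategy does not exist.

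\textbf{Main obstacle.} The proof is short, and the only point needing care is the quantifier in the definitions: both the characterisation of B\"uchi-optimal strategies and the supremum defining $\bval$ range over \emph{all} (history-dependent, randomised) strategies, so I must rule out that some elaborate B\"uchi-optimal strategy could nonetheless achieve value $\frac{1}{1-\lambda}$. The observation that resolves this is that, in this instance, B\"uchi-optimality is exactly equivalent to almost-sure absorption into $p_1$, and almost-sure finiteness of the absorption time already forces $\eE_\sigma[\lambda^{N+1}]>0$ strictly, regardless of how the strategy randomises or remembers history; everything else is an elementary geometric-series computation. I would also note that this phenomenon is already present in the pure B\"uchi-plus-discounted-reward setting, which is why the algorithmic results in the sequel target $\varepsilon$-optimal strategies rather than optimal ones.
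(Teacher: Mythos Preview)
Your proof is correct and takes essentially the same approach as the paper: both exhibit a counterexample in which collecting discounted reward (taking the $a$-option) conflicts with the B\"uchi objective (taking the $b$-option), so that delaying the switch arbitrarily long approaches but never attains the supremum $\tfrac{1}{1-\lambda}$. The paper's version lets the agent freely choose $a$ or $b$ at every step (B\"uchi-optimality meaning infinitely many $b$'s), whereas yours makes the $b$-state absorbing (B\"uchi-optimality meaning almost-sure absorption into $p_1$); this structural difference is inessential, and your argument via $\eE_\sigma[\lambda^{N+1}]>0$ is in fact more detailed than the paper's sketch.
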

\begin{proof}
Consider an MDP where one can freely choose the next letter from an alphabet $\{a,b\}$ and have a reward of $1$ for $a$ and $0$ for $b$, as well as a primary B\"uchi objective to see infinitely many $b$'s, then we cannot achieve an expected reward of $\frac{1}{1-\lambda}$ while satisfying the B\"uchi objective. We can, however, get arbitrarily close, e.g., by producing $a$'s until a reward $>\frac{1}{1-\lambda}{-}\varepsilon$ is collected for any given $\varepsilon > 0$, and henceforth produce $b$'s.
While the optimal B\"uchi-discounted value is $\frac{1}{1-\lambda}$, no (finite or infinite memory) strategy can attain it.
\end{proof}

\subsection{Known MDP: Probabilistic Model Checking}
Consider the problem to compute the optimal B\"uchi-discounted value and near-optimal strategies when $\mathcal M$ and $\mathcal R$, and therefore their product $\mathcal P$, are known. 
A possible first step is to model check the MDP $\mathcal P$ against the B\"uchi objective. This provides the probability of achieving the B\"uchi objective from every state together with a positional strategy $\sigma^*$ of how to achieve it.

Model checking the product MDP $\mathcal P = (Q, q_0, A, T, \rho, F)$ is a standard operation~\cite{Baier08,Luca98}. One would typically start with qualitative model checking, which consists of two intertwined procedures:
\begin{enumerate}
    \item Remove all states in $Q$ from which no accepting transition is reachable with positive probability. If any states were removed, go to step 2.
\item Recursively remove state-action pairs $(q,a)$ where $T(q,a)(q'){>}0$ for any state $q'$ that has been previously removed, and remove states $q$ such that all of its state-action pairs $(q,a)$ have already been removed. If any state was removed at the end of this procedure, go back to step 1.
\end{enumerate}
Both steps work in time linear in the transition graph of $\mathcal P$, and a fixed point is reached in at most $|Q|$ steps, because a new procedure call is made only if at least one  state was removed.
The remaining states, $Q_1 \subseteq Q$, are those, for which we can satisfy the B\"uchi objective almost surely, and the last application of (1) provides such a strategy.

To extend this method to quantitative model checking (computing the optimal probability of satisfying the B\"uchi objective), we can simply add, for all states $q$ removed during the procedure above (whose set will be denoted by $Q_{<1} = Q\setminus Q_1$) and all actions $a$, variables $p_{(q,a)}$ and $p_q$ that represent the probability to win when taking the state-action pair $(q,a)$ and when starting at $q$, respectively.
To calculate the correct probabilities, we define the following linear program that these probabilities have to satisfy.
\begin{mdframed}[backgroundcolor=black!10, roundcorner=10pt,leftmargin=1, rightmargin=1, innerleftmargin=15, innertopmargin=15,innerbottommargin=15, outerlinewidth=4, linecolor=white, nobreak=true]
For all $q\in Q_{<1}$ and $a \in A$:
\begin{align*}
p_{(q,a)}&= \sum_{q' \in Q_{<1}} T(q,a)(q')p_q + \sum_{q' \in Q_{1}} T(q,a)(q') \\
p_q &\geq 0 \qquad\text{ and }\qquad p_q \geq p_{(q,a)}
\end{align*}
\hfill with the objective to \emph{Minimise} $\sum_{q\in Q_{<1}} p_q$.
\end{mdframed}
Note that we can achieve $p_q$ value when starting at $q$ by playing a safe strategy that at any state, $q'$, only uses an action, $a$, such that $p_{q'} = p_{(q',a)}$
and such that there is a positive probability to visit an accepting transition as done in (1).

\vspace{0.2em} \noindent\textbf{Compute Near-Optimal Strategies.} To find a strategy that is near-optimal with respect to the discounted reward without sacrificing the probability to satisfy the B\"{u}chi objective we do the following.
First, we remove all state-action pairs $(q,a)$ such that $p_q \neq p_{(q,a)}$ from $\mathcal P$, because taking
any such action would reduce the probability of generating an accepting run.
Next, we remove all states that have no actions left. 
Note that, in the example discussed in Lemma~\ref{lemma:noopt}, we would neither remove any states nor any state-action pairs.
Now, for the remaining states, $q \in Q'$, and state-actions pairs, $(q,a)$, we introduce variables $\rho_q$ and $\rho_{(q,a)}$, respectively. We find the optimal discounted reward values and a positional strategy $\tau$ that realises them using the following linear program.
\begin{mdframed}[backgroundcolor=black!10, roundcorner=10pt,leftmargin=1, rightmargin=1, innerleftmargin=15, innertopmargin=15,innerbottommargin=15, outerlinewidth=4, linecolor=white]
For all $q\in Q'$ and $a \in A(q)$:
\begin{align*}
\rho_q &\geq \rho_{(q,a)} \\
\rho_{(q,a)} &= 
\sum_{q' \in Q'} T(q,a)(q') \big(\rho(q,a,q') + \lambda \rho_{q'}\big)\ ,
\end{align*}
\hfill with the objective to \emph{Minimize} $\sum_{q\in Q'} \rho_q$.
\end{mdframed}

In the example shown in Lemma~\ref{lemma:noopt}, this would be to always produce $a$'s---which would not satisfy the B\"uchi objective.
However, any strategy obtained this way can either be followed long enough that the value of the tail is marginal, or we can initially follow this strategy, and switch to a strategy $\sigma^*$ that pursues the B\"uchi objective with a small probability in every step.
Both approaches will lead to a strategy that maximises the probability to satisfy the B\"uchi objective (in our example, with probability $1$) while providing an expected payoff that is near-optimal among the strategies that maximise satisfying the B\"uchi objective.

\subsection{Reinforcement Learning: Reward Translation}

When the MDP is not known, we use model-free RL to approximate optimal value and learn a near optimal strategy.
We present a reduction from the product MDP $\mathcal P = (Q, q_0, A, T,  \rho, F)$ to a related MDP $\mathcal P_\lambda$ such that optimal values from $\mathcal P_\lambda$ can be used to compute the optimal value in $\mathcal P$.

\begin{definition}[Reward Translation]
For a product MDP $\mathcal P = (Q, q_0, A, T,  \rho, F)$ with discount factor $\lambda$, consider a related MDP $\mathcal P_\lambda = (Q', (q_0,0), A, T',  \rho, F')$, where:
\begin{itemize}
    \item $Q' = (Q \times \set{0,1}) \cup \{t\}$ is the state space, where $t$ is a fresh trap state including the initial state 
    $(q_0, 0)$,
    \item $T'\colon Q' \times A \times Q' \to [0,1]$ is the transition function where 
    \begin{align*}
        T'((q, 0), a, (q', 0)) &= (1-\lambda) T(q, a, q') \\
        T'((q, 0), a, (q', 1)) &= \lambda T(q, a, q') \\
        T'((q, 1), a, (q', 1)) &= 
            \begin{cases}
                \zeta T(q, a, q') & \text{if $(q, a, q')  \in F$} \\
                T(q, a, q') & \text{otherwise}
            \end{cases}
        \\
        T'((q, 1), a, t) &= (1-\zeta) \sum\limits_{\{q'\mid (q,a,q')\in F\}}T(q,a,q')  \\
        T'(t, \sigma, t) &= 1
    \end{align*}
    where $\zeta \in (0,1)$ is a parameter,
    \item and $\rho'\colon Q' \times A \times Q' \to \mathbb{R}$ is the reward function where
    \begin{align*}
        \rho'((q, 0), a, (q', 0)) &= \rho(q, a,q') \\
        \rho'((q, 0), a, (q', 1)) &= \rho(q, a,q') \\
        \rho'((q, 1), a, (q', 1)) &= 0 \\
        \rho'((q, 1), a, t) &= f \\
        \rho'(t, a, t) &= 0
    \end{align*}
    where $f \geq 1$ is a (usually large) parameter.
\end{itemize}
\end{definition}

\begin{figure}[t!]
    \centering
    \begin{tikzpicture}[scale=0.85,transform shape]
    \colorlet{darkgreen}{green!40!black}
    
    \begin{scope}[name prefix=org-,xshift=-2cm,yshift=-2cm]
    \node[box state,fill=safecellcolor] (S0) {$q_0$};
    \node[prob state,fill=safecellcolor] (Prob)
    [below=1.5cm of S0] {};
    \node[box state,fill=safecellcolor] (S1)
    [left=2.5cm of Prob] {$q_1$};
    \node[box state,fill=safecellcolor] (S2)
    [right=2.5cm of Prob] {$q_2$};
    \path[->]
    (S2) edge[bend right=30, swap] node {$1,\rho_2$} (org-S0);
    \path[-]
    (S0) edge[swap] node[] {} (Prob);
    \path[->]
    (Prob) edge[swap] node[accepting dot, label={above:$p, \rho_0$}] {} (S1)
    edge node {$1-p, \rho'_0$} (S2)
    (S1) edge[bend left=30] node[accepting dot,label={$1,\rho_1$}] {} (org-S0);
    \end{scope}
    
    \node[single arrow, fill=purple!25] at (2.5cm,-3.5cm) {\phantom{m}};
    \begin{scope}[name prefix=disc-,xshift=8.5cm]
    \node[box state,fill=safecellcolor] (S0) {$(q_0,0)$};
    \node[prob state,fill=safecellcolor] (Prob)
    [below=1.5cm of S0] {};
    \node[box state,fill=safecellcolor] (S1)
    [left=3cm of Prob] {$(q_1,0)$};
    \node[box state,fill=safecellcolor] (S2)
    [right=3cm of Prob] {$(q_2,0)$};
    \path[->]
    (S2) edge[bend right=30, swap] node {$\lambda,\rho_2$} (disc-S0);
    \path[-]
    (S0) edge[swap] node[] {} (Prob);
    \path[->]
    (Prob) edge[swap] node{\small $p\lambda, \rho_0$} (S1)
    edge node {\small $(1-p)\lambda, \rho'_0$} (S2)
    (S1) edge[bend left=30] node{$\lambda,\rho_1$} (disc-S0);
    \end{scope}
    
    \begin{scope}[name prefix=trap-, xshift=8.5cm,yshift=-4cm]
    \node[box state,fill=safecellcolor] (S0) {$(q_0,1)$};
    \node[prob state,fill=safecellcolor] (Prob)
    [below=1.5cm of S0] {};
    \node[box state,fill=safecellcolor] (S1)
    [left=3cm of Prob] {$(q_1,1)$};
    \node[box state,fill=safecellcolor] (S2)
    [right=3cm of Prob] {$(q_2,1)$};
    \path[->]
    (S2) edge[bend right=-7, swap] node[pos=0.6, label={[label distance=-3mm]above:$1$}] {} (trap-S0);
    \node[prob state,fill=safecellcolor] (Tau1)
    [left=2cm of S1] {};
    \node[box state, darkgreen, thick, fill=goodcellcolor] (Sink)
    [below=1.7cm of Prob] {$t$};
    \path[-]
    (S0) edge[swap] node {} (Prob);
    \path[->]
    (Prob) edge[swap] node[below] {$p\zeta$} (S1)
    edge node[pos=0.34] {$1-p$} (S2)
    edge[darkgreen] node[black] {$p(1-\zeta)$, $f$} (Sink)
    (Tau1) edge[out=90,in=180] node {$\zeta$} (trap-S0)
    edge[out=270,in=180,swap,darkgreen] node[black] {$1-\zeta$, $f$} (Sink)
    (S1) edge[swap] node[below] {} (Tau1);
    \path[->]
    (disc-Prob) edge[bend left=-15] node[pos=0.15,label={[label distance=-4mm]right:\scriptsize $(1-p)(1-\lambda),\rho'_0$}] {} (trap-S2);
    \path[->]
    (disc-Prob) edge[bend right=-15] node[pos=0.1,label={[label distance=0mm]left:\scriptsize $p(1-\lambda),\rho_0$}] {} (trap-S1);
    \path[->]
    (disc-S1) edge[bend right=30] node[pos=0.3,label={left:$1-\lambda,\rho_1$}] {} (trap-S0);
    \path[->]
    (disc-S2) edge[bend left=30]  node[pos=0.2] {$1-\lambda,\rho_2$} (trap-S0);
    \end{scope}
    \end{tikzpicture}
    \caption{A translation of a product MDP with a combined B\"{u}chi-discounted reward objective with a discount factor $\lambda$ on the left hand side to an MDP with total reward objective on the right hand side. 
    Each transition is labelled with its probability followed by its reward. Rewards with value $0$ are omitted. Intuitively, the new MDP consists of two phases: the maximisation of the discounted reward (states labelled $(q_i,0)$) followed by maximisation of the B\"{u}chi objective (states labelled $(q_i,1)$). We move from the first phase to the second with probability $1-\lambda$ in each step. Any original accepting transition is marked with a green dot and is replaced in the new MDP with a transition that leads with probability $p(1-\zeta)$ to the new trap state $t$ and with probability $p\zeta$ to the original target state, where $p$ is the original transition probability.}
    \label{fig:add-sink}
\end{figure}
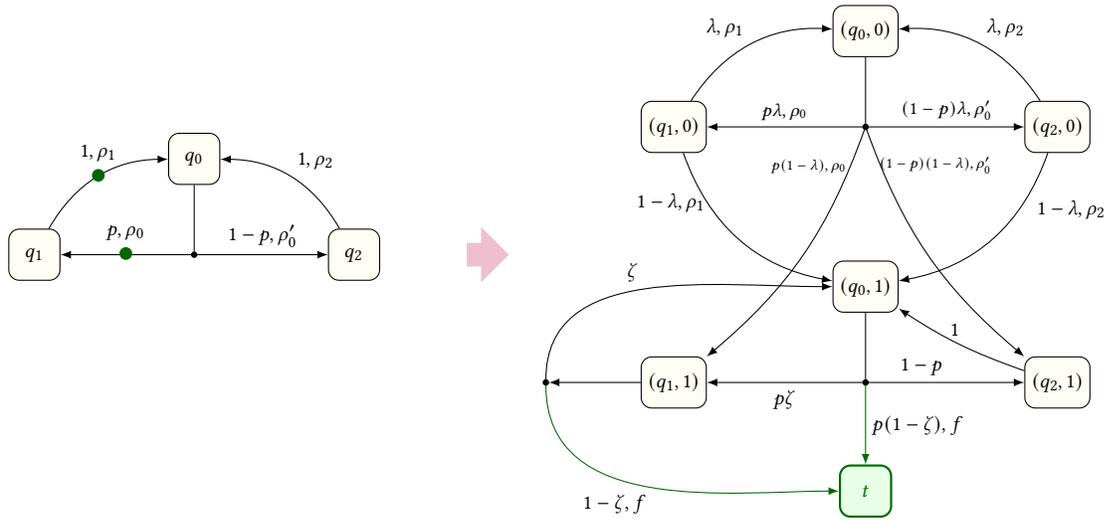

An example of this translation is shown in Figure~\ref{fig:add-sink}.
Note that while the MDP $\mathcal P_\lambda$ has two additional parameters, $f$ and $\zeta$, it only has numerical rewards and is therefore simpler to analyse. 

The parameter $\zeta$ has a similar meaning as in~\cite{Hahn19}: with probability $1-\zeta$ one declares that accepting edges will be seen forever after seeing a single accepting edge. 
Increasing the value of $\zeta$ means that the agent needs to see a larger number of accepting edges to obtain the same probability of declaring that accepting edges will be seen infinitely often. 
As in \cite{Hahn19}, if the B\"uchi objective is satisfied with probability $1$ any $0 < \zeta < 1$ is correct in our reduction. The parameter $f$ is used to weight the value of the B\"uchi objective relative to the discounted reward objective. As we will show, for large enough $f$, the two objectives are ordered lexicographically.
For $\mathcal P_\lambda$, we want to learn the optimal expected reward. For this, we define
    \begin{itemize}
        \item 
    $\val'(r) = \sum_{i=0}^\infty \rho'(r_i, a_i, r_{i+1})$ for any run $r$ and,
        \item 
    $\val_\sigma'\colon q \mapsto  \mathbb E \Set{ \val'(r) \mid r \in \Runs^\sigma_q(\mathcal P_\lambda)}$, and

        \item 
    
    $\val'\colon q \mapsto \sup_\sigma \val_\sigma'(q)$, as well as a strategy $\sigma$ with $\val_\sigma'(q_0)=\val'(q_0)$.
        \end{itemize}
We say that a positional strategy $\sigma$ is $\varepsilon$-consistent with value function $\val'$ on a set $S$ if $|\val'_\sigma (q) - \val'(q)| < \varepsilon$ for all $q \in S$.

Let us start with an established learning approach for B\"uchi objectives, which is a special case for the approach from \cite{Hahn19}.

\begin{ttheorem}[Limit Reachability~\cite{Hahn19}]
\label{theo:blearn}
For a product MDP $\mathcal P$ and a given parameter $f \geq 1$, there exists $\varepsilon>0$ and $\zeta^* \in (0,1)$ such that, for all $\zeta\in(\zeta^*,1)$ and all $q\in Q$, 
\[
\val'\big((q,1)\big) \in [f \cdot \blike(q),f\cdot \blike(q)+\varepsilon],
\]
and all positional strategies that are $\varepsilon$-consistent with $\val'$ from $Q\times{1}$ are optimal strategies for the B\"uchi objective.
\end{ttheorem}

\begin{ttheorem}
For a given product MDP $\mathcal P$ and $\varepsilon \in (0,1)$, there is a parameter $f^*\geq 1$
such that, for all $f\geq f^*$, there is a $\zeta^* \in (0,1)$ such that, for all $\zeta\in(\zeta^*,1)$, 
\[
\val'\big((q,0)\big) \in [\bval(q)+f\cdot\blike(q),\bval(q)+f\cdot\blike(q)+\varepsilon].\] 
\end{ttheorem}

\begin{proof}
We start with a number of simple observations.

\begin{enumerate}
    \item For every set of parameters,  $\mathcal P_\lambda$ is a total payoff MDP and therefore has positional optimal strategies.

    \item For every strategy that provides the optimal probability $\blike(q_0)$ to satisfy the B\"uchi objective, the following holds:
    for all $k\in \mathbb N$, under the assumption that the MDP moves to the $1$-copy after $k$ steps,
    the expected chance of satisfying the B\"uchi objective is $\blike(q_0)$.
    Moreover, this value is $\leq \blike(q_0)$ for every strategy, and, for some $k'\leq |Q|$ and all $k \geq k'$, it is $< \blike(q_0)$ for a non-optimal positional strategy. 
    Thus, for any positional non-optimal strategy, there is a probability $p < \blike(q_0)$ such that the chance of meeting the B\"uchi objective is $\leq p$.
    Let $p^*$ be the maximal such value. (Note that the set of positional strategies is finite.)
    
\item Let $d$ be the maximal difference in expected discounted payoff between two strategies. ($d$ can be bounded by maximum value of $\rho$ divided by $1-\lambda$).
Then we choose
$f^* \geq \frac{d}{\gamma(\blike(q_0)-p^*)}+1$. 
\end{enumerate}
With these observations, we choose $f^*$ as in (3) and, for $f\geq f^*$, $\zeta^*$ as in Theorem \ref{theo:blearn}.
This provides correct strategies for obtaining the B\"uchi objective with maximal probability from all positions in the $1$-copy, and values that approximate $f\cdot \blike\big((q,1)\big)$ with $\varepsilon$ precision.

Let us now consider two positional strategies for $\mathcal R_\gamma$: $\sigma$, which maximises the chance of satisfying the B\"uchi objective, and $\tau$, which does not.
By (3) and Theorem \ref{theo:blearn}, this implies that the expected reward of $\sigma$ is at least $1-\varepsilon$ better than the expected reward of $\tau$.
Thus, a positional strategy with optimal reward will also maximize the chance of satisfying the B\"uchi objective.

Let us now consider two positional strategies for $\mathcal P_\lambda$ that are both $\varepsilon$-consistent with $\val$ on $Q \times \{1\}$ (and thus optimal w.r.t.\ the B\"uchi objective there), $\sigma$ and $\tau$, such that the expected reward in the $0$-copy of $\sigma$ is at least $\varepsilon$ higher than that of $\tau$.
Then, using Theorem \ref{theo:blearn}, the expected reward of $\sigma$ in $\mathcal P_\lambda$ is higher than the reward by $\tau$.

Putting these two together, we get that the optimal solution for $\mathcal R_\gamma$ provides that, for all $k \in \mathbb N$, the probability of satisfying the B\"uchi objective after $k$ steps is $\blike(q_0)$ while, among the strategies with this property, the expected discounted reward is $\varepsilon$-optimal.
\end{proof}

\vspace{0.2em}\noindent\textbf{Learning.}
For given parameters, $\mathcal P_\lambda$ is simply an MDP with total rewards \emph{and} contraction on $Q \times \{0\}$ and a reachability objective in $Q \times \{1\} \cup \{t\}$.
The values and strategies to obtain them can be learned with standard techniques, such as $Q$-learning.
Note that the parameter $\varepsilon$ can be selected after learning is complete since the strategy for $\mathcal P_\lambda$ is independent of $\varepsilon$.

\begin{ttheorem}
Given $\mathcal P_\lambda$ and parameters $f$ and $\zeta$, we can use $Q$-learning to find $\val'$ and an optimal strategy $\sigma$.
\end{ttheorem}

These three theorems provide us with a way to robustly infer a near-optimal strategy for $\mathcal M$ from $\mathcal R$ for  appropriate parameters $f$ and $\zeta$:
To obtain a $2\varepsilon$-optimal strategy for $\mathcal M$, we can simply find or approximate an optimal strategy for $\mathcal R$.
To transfer this strategy to $\mathcal M$, we can follow the $0$-copy  ($Q\times \{0\}$) long enough; say $k$ steps, so that the contribution of all but the first $k$ transitions is smaller than $\varepsilon$.
We can then move on to follow the strategy for the $1$-copy ($Q\times \{1\}$).

\begin{corollary}
For a product MDP $\mathcal P$, a $\varepsilon>0$ and appropriate parameter $f\geq 1$ and $\zeta \in (0,1)$, we can infer a near optimal strategy for $\mathcal P$ from a near optimal strategy for $\mathcal P_\lambda$ with parameters $f$ and $\zeta$.
\end{corollary}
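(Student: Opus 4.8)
The plan is to build the required $\mathcal P$-strategy by grafting the two ``phases'' of a near-optimal strategy for $\mathcal P_\lambda$ onto $\mathcal P$, truncating the first phase after a fixed number of steps. First I would fix a near-optimal positional strategy $\sigma'$ for the total-reward MDP $\mathcal P_\lambda$ --- one exists because $\mathcal P_\lambda$ is a total-payoff MDP, and $Q$-learning produces one --- and split it into its restrictions $\sigma'_0, \sigma'_1 \colon Q \to \DIST(A)$ to the $0$-copy $Q\times\{0\}$ and the $1$-copy $Q\times\{1\}$. The key features of the reduction, already used in the proof of the preceding theorem, are: (i) on the $0$-copy the conditional dynamics of $\mathcal P_\lambda$ given that no switch to the $1$-copy has occurred are exactly those of $\mathcal P$, a switch occurs with probability $1-\lambda$ per step, and hence the reward $\sigma'$ accumulates in the $0$-copy equals $\sum_{i\ge 1}\lambda^{i-1}\mathbb E_{\sigma'_0}[\rho_i]$, the $\lambda$-discounted reward of the $\mathcal P$-strategy $\sigma'_0$, while the $f$-reward on entering the trap $t$ accounts for the additive term $f\cdot\blike(q_0)$; (ii) by the Limit-Reachability theorem $\sigma'_1$ is, for $\zeta$ close to $1$, an optimal B\"uchi strategy from every $(q,1)$; and (iii) from the proof of the preceding theorem, for $f\ge f^*$ a sufficiently near-optimal $\sigma'$ is B\"uchi-optimal and $\sigma'_0$ is ``horizon-robust'', i.e.\ $\mathbb E_{\sigma'_0}[\blike(q_k)] = \blike(q_0)$ for every $k$, where $q_k$ is the $\mathcal P$-state reached after $k$ steps under $\sigma'_0$ from $q_0$.

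Second, I would choose the cut-off: with $R := \max_{q,a,q'}|\rho(q,a,q')|$, pick $k$ with $\lambda^{k}\tfrac{R}{1-\lambda} < \varepsilon$, and let $\sigma$ be the finite-memory $\mathcal P$-strategy that plays $\sigma'_0$ for the first $k$ steps and $\sigma'_1$ afterwards (memory: a counter saturating at $k$). For B\"uchi-optimality I would argue that acceptance of a run depends only on its tail, so the finite $\sigma'_0$-prefix is irrelevant; from step $k$ onward $\sigma$ follows $\sigma'_1$ from the reached state $q_k$ and attains $\blike(q_k)$, and averaging over the prefix with the horizon-robustness identity gives $\blike_\sigma(q_0) = \blike(q_0)$. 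For the discounted value I would split $\val_\sigma(q_0) = \sum_{i=1}^{k}\lambda^{i-1}\mathbb E_{\sigma'_0}[\rho_i] + (\text{tail})$, noting that the first term is precisely the $k$-truncation of the $0$-copy reward of $\sigma'$ and $|(\text{tail})| < \varepsilon$ by the choice of $k$; since the $0$-copy reward of $\sigma'$ is within $O(\varepsilon)$ of $\bval(q_0)$ (the preceding theorem together with near-optimality of $\sigma'$), and $\bval(q_0)$ differs from its $k$-truncation along any B\"uchi-optimal strategy by at most $\varepsilon$, I get $\val_\sigma(q_0) \ge \bval(q_0) - c\varepsilon$ for an absolute constant $c$. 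Combined with B\"uchi-optimality, $\sigma$ is a $c\varepsilon$-optimal B\"uchi-discounted strategy for $\mathcal P$, and rescaling $\varepsilon$ at the outset gives the statement; note $k$, hence $\sigma$, can be fixed after learning since $\sigma'$ does not depend on $\varepsilon$.

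The main obstacle is the error bookkeeping in the second paragraph: one must keep separate the slack $\delta$ by which $\sigma'$ falls short of optimality in $\mathcal P_\lambda$, the $\varepsilon$-gap between $\val'$ and the idealized values of the preceding theorem, and the $k$-step truncation error, and check that taking $\delta$ below the $\Theta(f)$-sized reward gap between B\"uchi-optimal and non-B\"uchi-optimal strategies (identified in that proof) both forces $\sigma'$ to be B\"uchi-optimal and makes $\sigma'_0$ horizon-robust up to $O(\varepsilon)$. The only genuinely new ingredient beyond the preceding theorem is the observation that replacing the geometric switching time built into $\mathcal P_\lambda$ by the deterministic time $k$ costs nothing for the B\"uchi probability (tails only) and at most the already-bounded tail for the discounted reward; everything else is routine estimation using boundedness of $\rho$ and the modulus-$\lambda$ contraction.
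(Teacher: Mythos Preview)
Your proposal is correct and follows essentially the same approach as the paper: the paper's justification (the paragraph immediately preceding the corollary) is precisely to follow the $0$-copy strategy for a fixed number $k$ of steps chosen so that the discounted tail is below $\varepsilon$, and then switch to the $1$-copy strategy. Your write-up is considerably more detailed than the paper's---in particular, you make explicit the horizon-robustness of $\sigma'_0$ and the separation of the three error sources---but the underlying construction and the reason it works are identical.
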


\section{Experimental Results}
\label{sec:experiments}

\begin{table}[t]
    \centering
    \begin{tabular}{p{5cm}rrrrrrrrrrrrr}
        \toprule
        \belowrulesepcolor{black!10} 
        Name  & states & time (s) & $f$ & $\zeta$ & $\gamma$ & $\alpha$ & $\epsilon$ & init & ep-l & ep-n \\
        \aboverulesepcolor{black!10} 
        \midrule
        cheapest & 6 & 0.56 &  &  &   &  & 0.50 & &  & 3k \\
        promises & 16 & 0.71 &  &  &  &  & 0.50 & &  & 3k \\
        robot 4x4 & 64 & 0.30 &  &  &   &  & 0.20 & &  & 1k \\
        virus & 859 & 61.28 &  & 0.95 &  &  &  & & 50 & 150k \\
        busyRing2 & 169 & 45.38 &  &  &   & 0.005 & 0.75 & &  & 175k \\
        twoWECs & 6 & 4.80 & 500 &  &  & 0.005 & 0.20 &  &  & 30k \\
        officeZapPatrol & 876 & 165.05 &  & 0.5 &  &  &  & 7 & 100 & 350k \\
        officePreferences & 1248 & 77.27 & 100 & 0 &  & 0.002 &  & 100 & 150 & 200k \\
        officeZapPreferences & 1254 & 111.01 & 100 & 0 &  & 0.005 & 0.01 & 100 & 1500 & 250k \\
        \bottomrule
    \end{tabular}
    \caption{Q-learning results. Blank entries indicate that default parameters are used. The default parameters are $f = 10$, $\zeta = 0.99$, $\gamma = 0.999$, $\alpha = 0.01$, $\epsilon = 0.1$, $\text{init}=0$, $\text{ep-l}=20$, and $\text{ep-n}=20k$. The same external discount factor of $\lambda=0.99$ was used for all experiments.}
    \label{tab:results}
\end{table}
We implemented the construction described in Section~\ref{sec:omega_rm} in the tool \textsc{Mungojerrie}~\cite{Mungojerrie}.
The construction is implemented on-the-fly, where the states of the MDP and the $\omega$-regular reward machine are kept independent and concatenated at each time step. We ran tabular Q-learning on multiple case studies, as seen in Table~\ref{tab:results}. Table~\ref{tab:results} shows the name of the case study, the number of states in the product MDP, the time taken for learning in seconds, the value of $f$, $\zeta$, and algorithm discount factor $\gamma$. The table also shows the learning rate $\alpha$, the $\epsilon$-greedy exploration rate $\epsilon$, the value the Q-table was initialised to, the episode reset length, and the number of training episodes. The episode reset length is the number of time steps between accepting edges in the second layer that needs to be exceeded for the episode to be reset. After learning is complete, we verify that the values of the learned strategy matches the values computed by the linear programs described in Section~\ref{sec:omega_rm}. This ensures that the learned strategy can be transformed into an $\varepsilon$-optimal strategy for any $\varepsilon > 0$. For all of our experiments, we use the same external discount factor $\lambda = 0.99$.

Example \emph{cheapest} shows how quantitative rewards may supplement an $\omega$-regular specification to steer the agent toward a path of minimum \emph{cost} rather than a path of minimum \emph{length}.  Example \emph{promises} illustrates the use of quantitative rewards to model advances that an agent may get in return for a promise to fulfil an obligation encoded as an $\omega$-regular objective.  Examples \emph{robot 4x4} and \emph{virus} were used in \cite{Hahn21c} in the context of multiple $\omega$-regular properties lexicographically combined.
Their use here demonstrates the wider set of specifications allowed by $\omega$-regular reward machines.  In \emph{busyRing2}, quantitative rewards are used to measure the fraction of time spent in negotiating the asynchronous arbitration protocol.  In \emph{twoWECs}, quantitative rewards indicate preference between sets of states that satisfy the $\omega$-regular specification.  The last three examples are based on the office grid-world discussed in the introduction. Since these examples require deep exploration in order to discover the near-optimal strategy, we initialised the Q-table optimistically to aid exploration.

Overall, our experimental results demonstrate the effectiveness and versatility of $\omega$-regular reward machines in solving a wide range of RL problems with complex specifications. By leveraging the power of $\omega$-regular objectives and discounted rewards, we can specify and learn complex behaviours and preferences that would be difficult to express with traditional RL techniques. Moreover, our open-source implementation may allow researchers and practitioners to easily apply these techniques to their own problems.

\section{Conclusion}
\label{sec:conclusion} 
Reward machines and formal specifications are two leading reward programming languages that roughly correspond to imperative (how to give rewards?) and declarative (what to give rewards for?) ways of expressing programmer's intent. 
This paper presents a hybrid approach that combines the declarative and imperative specifications.
The $\omega$-regular RMs can be constructed by expressing declarative specification in LTL and the imperative reward specification as reward machines. 
The $\omega$-RMs strictly generalise both reward machines (from finite-horizon behaviour to infinite-horizon) and $\omega$-regular languages (from qualitative to quantitative satisfaction) in a natural framework. 
We presented a parametric reduction from the optimisation problem over $\omega$-RM to an optimal reward reachability problem that can be constructed and learned in a model-free manner. 

\begin{acks}
This work was supported in part by the EPSRC through grants EP/X017796/1 and EP/X03688X/1, the NSF through grant CCF-2009022 and the NSF CAREER award CCF-2146563; and the 
EU's Horizon 2020 research and innovation programme \includegraphics[height=8pt]{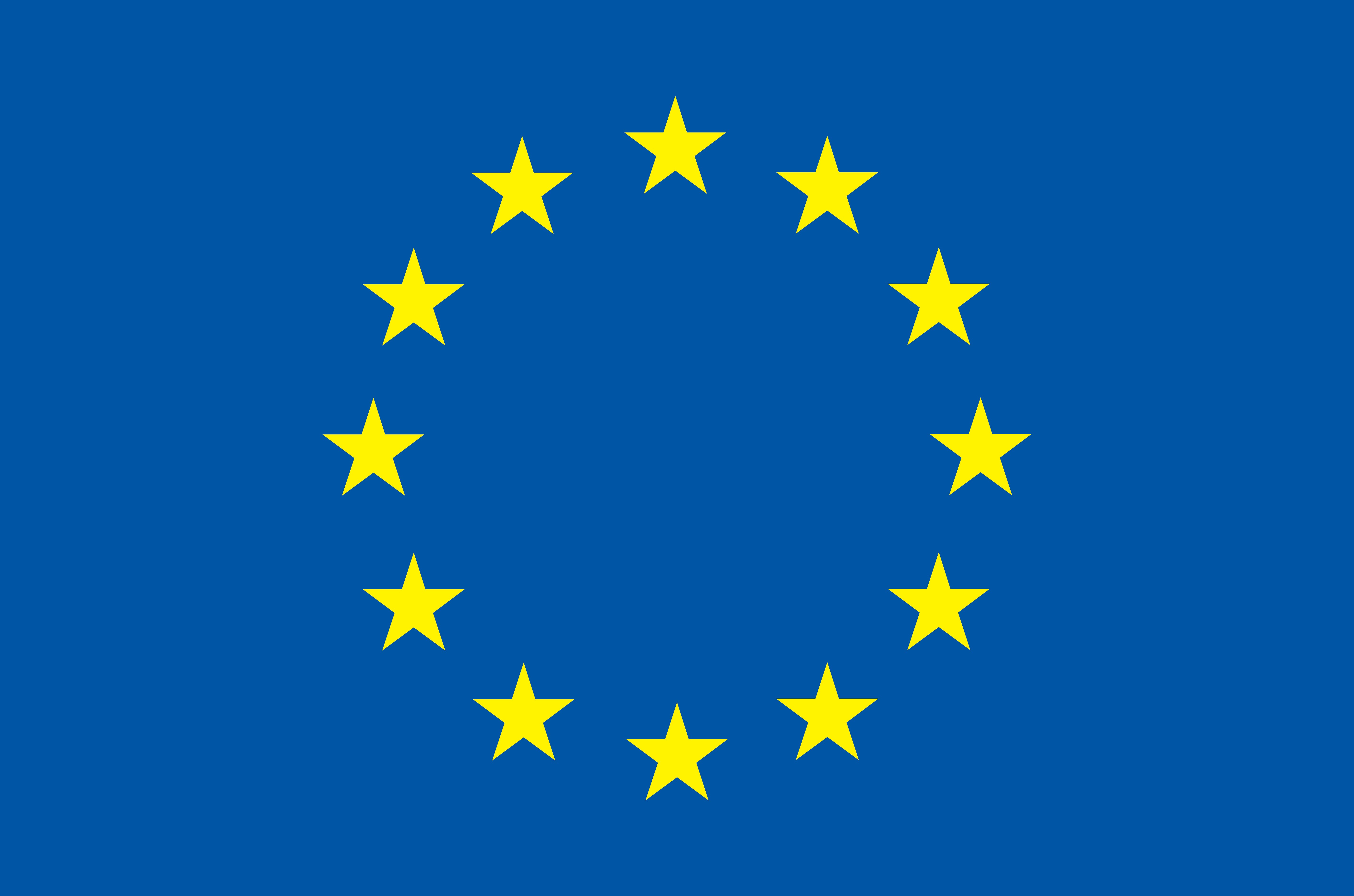}  under grant agreements No 864075 (CAESAR).
\end{acks}

\bibliographystyle{ACM-Reference-Format}
\bibliography{papers}

\end{document}